\colorlet{lightlightgray}{lightgray!40}
\def\BibTeX{{\rm B\kern-.05em{\sc i\kern-.025em b}\kern-.08em
    T\kern-.1667em\lower.7ex\hbox{E}\kern-.125emX}}
\DeclareMathAlphabet{\mathcal}{OMS}{cmsy}{m}{n}
\newtheorem{theorem}{Theorem}
\newtheorem{lemma}{Lemma}
\begin{document}

\title{Kernel-Based Anomaly Detection Using Generalized Hyperbolic Processes
}

\author{\IEEEauthorblockN{Pauline Bourigault\textsuperscript{1,2} \qquad Danilo P. Mandic\textsuperscript{2}}
\IEEEauthorblockA{\textit{\textsuperscript{1}Department of Computing, \textsuperscript{2}Department of Electronic and Electrical Engineering} \\
\textit{Imperial College London}\\
\{p.bourigault22, d.mandic\}@imperial.ac.uk}
}

\maketitle

\IEEEpubid{\begin{minipage}{\textwidth}\ \\[70pt]
\centering 
Copyright 2025 IEEE. Published in ICASSP 2025 – 2025 IEEE International Conference on Acoustics, Speech and Signal Processing (ICASSP), scheduled for 6-11 April 2025 in Hyderabad, India. Personal use of this material is permitted. However, permission to reprint/republish this material for advertising or promotional purposes or for creating new collective works for resale or redistribution to servers or lists, or to reuse any copyrighted component of this work in other works, must be obtained from the IEEE. Contact: Manager, Copyrights and Permissions / IEEE Service Center / 445 Hoes Lane / P.O. Box 1331 / Piscataway, NJ 08855-1331, USA. Telephone: + Intl. 908-562-3966.
\end{minipage}}

\IEEEpubidadjcol

\begin{abstract}
We present a novel approach to anomaly detection by integrating Generalized Hyperbolic (GH) processes into kernel-based methods. The GH distribution, known for its flexibility in modeling skewness, heavy tails, and kurtosis, helps to capture complex patterns in data that deviate from Gaussian assumptions. We propose a GH-based kernel function and utilize it within Kernel Density Estimation (KDE) and One-Class Support Vector Machines (OCSVM) to develop anomaly detection frameworks. Theoretical results confirmed the positive semi-definiteness and consistency of the GH-based kernel, ensuring its suitability for machine learning applications. Empirical evaluation on synthetic and real-world datasets showed that our method improves detection performance in scenarios involving heavy-tailed and asymmetric or imbalanced distributions. \href{https://github.com/paulinebourigault/GHKernelAnomalyDetect}{https://github.com/paulinebourigault/GHKernelAnomalyDetect}.
\end{abstract}

\begin{IEEEkeywords}
kernel, anomaly detection, generalized hyperbolic processes
\end{IEEEkeywords}

%%%%%%%%%%%%%%%%%%%%%%%%%%%%%%%%%%%%%%%%%%%%%%%%%%%%%%%%%%%%%%%%%%%%%%%%%%%%%%%%%%%%%%%%%%%%%%
\section{Introduction}\label{sec:intro} Anomaly detection is the process of identifying data points or events that deviate significantly from the majority of observations. Its importance is noticeable in a wide range of application domains, from cybersecurity intrusion detection and fraud detection to healthcare diagnostics and manufacturing fault prevention \cite{Chandola2009, Aggarwal2013OutlierA}. Although numerous approaches have been developed for anomaly detection, kernel-based methods such as one-class Support Vector Machines (OCSVM) and Kernel Density Estimation (KDE) \cite{Parzen1962} have been particularly popular due to their theoretical soundness and tractability in high-dimensional spaces \cite{Schlkopf2001}.
Conventional kernel-based approaches generally employ Gaussian or polynomial kernels, which rely on assumptions of data distributions that can be overly simplistic. In practice, real-world datasets often exhibit highly non-Gaussian characteristics, including heavy-tailed, skewed, or multi-modal behaviors. Moreover, imbalanced class distributions—where anomalies constitute a small fraction of the data—can further degrade performance \cite{Zhao2019PyODAP, Chalapathy2019}. For instance, in network traffic analysis or financial time series, anomalous events often lie in the tails of complex distributions, making them difficult to detect with Gaussian-based kernels.
\begin{figure}[t!]
    \centering
    \includegraphics[width=\columnwidth]{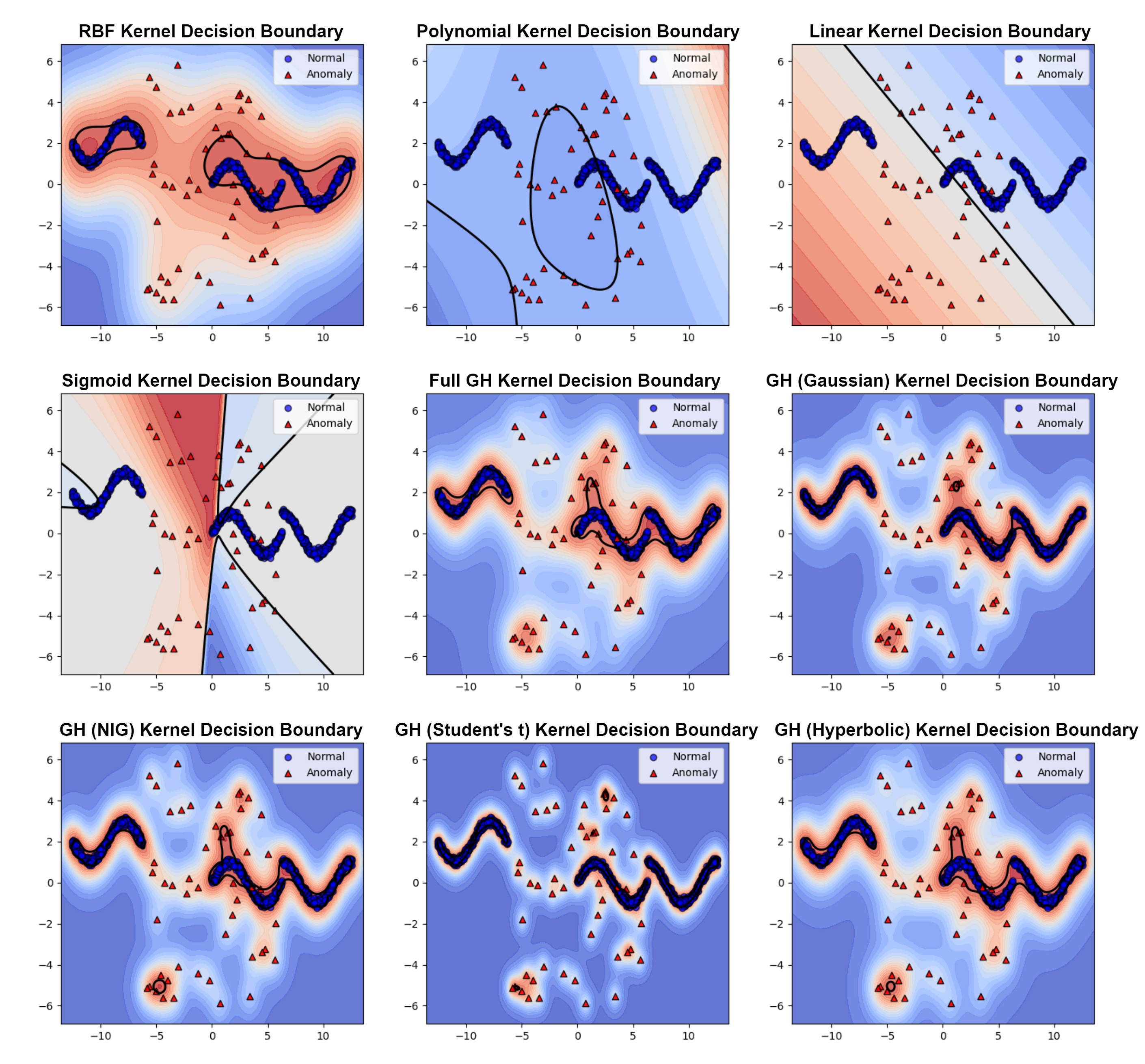}
    \caption{Decision boundaries for anomaly detection using various OCSVM kernels: standard and GH kernels. Blue circles denote normal data points, while red triangles represent anomalies. The black line represents the method's decision boundary. It visually illustrates the kernel's ability to distinguish between normal and anomalous data points. See Table \ref{tab:anomaly_detection_synthetic} and Fig. \ref{fig:enter-label} for details.}
    \label{fig:decbound}
\end{figure}

\textbf{Motivation and Contributions.} To address these limitations, we propose a novel kernel-based anomaly detection framework that leverages Generalized Hyperbolic (GH) distributions to capture heavy tails, skewness, and other complex distributional features common in real-world data. We (i) design a GH kernel that serves as a drop-in replacement for standard Gaussian kernels in kernel-based anomaly detection methods, (ii) establish the theoretical properties of the GH kernel (positive semi-definiteness, tail behavior, consistency), and (iii) demonstrate how to integrate this GH kernel into both OCSVM and KDE. Through experiments on synthetic and benchmark anomaly detection datasets (KDDCup99, ForestCover), we show that GH-based kernels outperform standard kernels, particularly in scenarios involving skewed or heavy-tailed distributions.

\section{Related Work}\label{sec:related}

Classical anomaly detection encompasses distance-based (e.g., \emph{k}NN), density-based (e.g., Local Outlier Factor), and tree-based approaches (e.g., Isolation Forest) \cite{IsolationForest, Aggarwal2013OutlierA}. Deep learning methods such as autoencoders and Generative Adversarial Networks (GANs) have also shown promise \cite{Chalapathy2019, schlegl2017unsupervised, Zhao2019PyODAP}, but often require extensive data and tuning, with less theoretical transparency than kernel-based models.
Among kernel-based approaches, the OCSVM \cite{Schlkopf2001} finds a boundary around normal data, labeling outliers as anomalies. Standard OCSVM generally assumes Gaussian or polynomial kernels, which may not capture heavy tails or skewness well. Alternatives such as Student’s \emph{t}-kernels and copula-based kernels \cite{NIPS2010_fc8001f8} improve flexibility, yet still struggle to jointly model skewness and heavy tails.
GH distributions \cite{Barndorff1977} offer a unified framework for heavy-tailed families (e.g., Student’s \emph{t}, normal-inverse Gaussian) and excel in modeling asymmetry and tail risk \cite{Cont2001}. Despite their suitability for non-Gaussian behaviors, GH-based methods remain underutilized in anomaly detection. Their ability to encode tail decay rates and skewness can alleviate the restrictive assumptions of purely Gaussian kernels.
GH-based frameworks have been explored mainly in areas such as clustering of skewed data \cite{browne2015mixture, Wei2019} and volatility modeling in finance \cite{Eberlein2002, Eberlein1995,prause1999generalized,takahashi2016volatility}. However, these do not address anomaly detection from a kernel-based perspective. This gap motivates our introduction of GH-based kernels for anomaly detection, enabling kernel-based methods such as OCSVM or KDE to better handle complex, non-Gaussian data.

%%%%%%%%%%%%%%%%%%%%%%%%%%%%%%%%%%%%%%%%%%%%%%%%%%%%%%%%%%%%%%%%%%%%%%%%%%%%%%%%%%%%%%%%%%%%%%
\section{GH Kernel-Based Anomaly Detection}

\subsection{Kernel Derivation from Generalized Hyperbolic Processes}\label{sec:gh-kernel}

\textbf{Anomaly detection.}
Anomaly detection refers to the identification of rare events or outliers in data, which significantly deviate from the expected pattern. Let $X=\{x_1,x_2,\dots,x_n\}$ be a dataset where each $x_i \in \mathbb{R}^d$. The goal is to assign an anomaly score $S(x_i)$ to each point such that higher scores correspond to anomalies. In kernel-based methods, this is often achieved by learning a decision function or a density estimate that assigns lower values to points lying in low-density regions.

\textbf{GH Distribution.}
The GH distribution is defined by its characteristic function (or moment-generating function). Let $X$ be a random variable following a GH distribution $X \sim \text{GH}(\lambda , \alpha , \beta , \delta , \mu)$, where $\lambda$ controls the tail behavior, $\alpha$ is the scale parameter,
$\beta$ is the skewness parameter,
$\delta$ controls the kurtosis of the distribution, and
$\mu$ is the location parameter. The probability density function (PDF) of $X$ is given by
\begin{gather}
\begin{aligned}
    f_{\text{GH}}&( x \,| \,  \lambda, \alpha, \beta , \delta, \mu)  =  \\ & \frac{(\gamma / \delta)^\lambda}{\sqrt{2 \pi} K_{\lambda} (\delta \gamma)} e^{\beta(x-\mu)} \frac{K_{\lambda-1/2} \Bigl(\alpha \sqrt{\delta^2 + (x-\mu)^2}\Bigr)}{\Bigl( \sqrt{\delta^2 + (x-\mu)^2} /\alpha \Bigr)^{\lambda - 1/2}},
\end{aligned}
\end{gather}
where $\gamma = \sqrt{\alpha^2 - \beta^2}$ and $K_{\lambda}$ is the modified Bessel function of the second kind. The GH distribution generalizes several well-known distributions, including the Gaussian, Student's t, and NIG distributions, making it suitable for modeling data with complex distributional characteristics.

\textbf{GH Kernel Function.}
To leverage the GH distribution in anomaly detection, we define a kernel function based on the GH distribution that measures the similarity between two data points $x$ and $y$ as
\begin{equation}\label{eq:GH_kernel_expression}
    K_{\text{GH}(x,y)} = \int_{\mathbb{R}} f_{\text{GH}} (x-u) f_{\text{GH}}(y-u) \, du ,
\end{equation}
where $f_{\text{GH}}(z)$ is the GH distribution PDF. The kernel represents the similarity between $x$ and $y$ by comparing the likelihoods of these points under the GH distribution.
\begin{lemma}
The GH kernel $K_{\text{GH}}(x,y)$ is positive semi-definite (PSD) to satisfy Mercer's condition for appropriately chosen parameters $\lambda$, $\alpha$, $\beta$, $\delta$, $\mu$.
\end{lemma}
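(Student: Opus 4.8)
The plan is to recognize $K_{\text{GH}}$ as the Gram kernel of an explicit feature map into the Hilbert space $L^2(\mathbb{R})$, for which positive semi-definiteness is automatic, and then to corroborate this via Bochner's theorem, since the kernel is in fact stationary. The role of the phrase ``appropriately chosen parameters'' is only to guarantee the integrability needed for these two constructions to make sense.

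First I would fix the admissible range $\delta > 0$, $\mu,\lambda \in \mathbb{R}$, and $\alpha > |\beta| \ge 0$, so that $\gamma = \sqrt{\alpha^2-\beta^2} > 0$. Under these constraints $f_{\text{GH}}(\cdot\mid\lambda,\alpha,\beta,\delta,\mu)$ is a genuine probability density: continuous, strictly positive, and --- using the large-argument asymptotics $K_\nu(t)\sim\sqrt{\pi/2t}\,e^{-t}$ together with the prefactor $e^{\beta(x-\mu)}$ --- exponentially decaying, with $f_{\text{GH}}(x) = O\!\bigl(e^{-(\alpha-|\beta|)|x|}\bigr)$ as $|x|\to\infty$. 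Hence $f_{\text{GH}}\in L^1(\mathbb{R})\cap L^2(\mathbb{R})\cap L^\infty(\mathbb{R})$; this is the only place the parameter hypotheses are genuinely used.

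Next I would introduce the feature map $\Phi\colon\mathbb{R}\to L^2(\mathbb{R})$, $\Phi(x) = f_{\text{GH}}(x-\cdot)$, which is well defined because translation is an $L^2$-isometry. Then \eqref{eq:GH_kernel_expression} reads exactly $K_{\text{GH}}(x,y) = \langle\Phi(x),\Phi(y)\rangle_{L^2(\mathbb{R})}$, with the integral converging absolutely by Cauchy--Schwarz; symmetry is immediate from the definition, and for any points $x_1,\dots,x_m$ and scalars $c_1,\dots,c_m$,
\[
\sum_{i,j} c_i c_j\,K_{\text{GH}}(x_i,x_j) \;=\; \Bigl\|\textstyle\sum_i c_i\,\Phi(x_i)\Bigr\|_{L^2(\mathbb{R})}^2 \;\ge\; 0 ,
\]
which is precisely the statement that every Gram matrix is positive semi-definite; since $K_{\text{GH}}$ is in addition continuous and bounded, its restriction to any compact set meets the hypotheses of Mercer's theorem. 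For data in $\mathbb{R}^d$ the same argument goes through verbatim with $L^2(\mathbb{R}^d)$, or one forms a coordinatewise product kernel. As an independent check, I would note that $K_{\text{GH}}(x,y) = (f_{\text{GH}}\ast\tilde f_{\text{GH}})(x-y)$ with $\tilde f_{\text{GH}}(z) = f_{\text{GH}}(-z)$, so the kernel is stationary and its Fourier transform equals $|\widehat{f_{\text{GH}}}|^2 \ge 0$; since $\widehat{f_{\text{GH}}}$ is, up to normalization, the everywhere-finite GH characteristic function, Bochner's theorem again yields positive definiteness.

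The main obstacle is not the positive-definiteness step, which is a one-line Hilbert-space fact, but the analytic bookkeeping that makes it rigorous: establishing $f_{\text{GH}}\in L^2(\mathbb{R})$ uniformly over the claimed parameter region, which requires both the Bessel tail asymptotics and some care at the origin, where $K_{\lambda-1/2}$ is singular when $\lambda\le 1/2$, and phrasing Mercer's condition appropriately on the non-compact domain $\mathbb{R}^d$. I would therefore devote most of the write-up to these integrability and regularity details and treat the PSD conclusion as the short final step.
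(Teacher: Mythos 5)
Your proposal is correct, and it starts from the same object as the paper --- the convolution representation $K_{\text{GH}}(x,y)=\int f_{\text{GH}}(x-u)f_{\text{GH}}(y-u)\,du$ --- but it supplies the one step that actually proves the claim and that the paper's own proof skips. The paper writes the quadratic form $\sum_{i,j}c_ic_j\int f_{\text{GH}}(x_i-u)f_{\text{GH}}(x_j-u)\,du$ and then concludes non-negativity ``since $f_{\text{GH}}$ is a PDF and thus non-negative''; that justification is not valid on its own (pointwise non-negativity of a kernel does not imply positive semi-definiteness). Your identity
\[
\sum_{i,j} c_i c_j\,K_{\text{GH}}(x_i,x_j) \;=\; \Bigl\|\textstyle\sum_i c_i\,f_{\text{GH}}(x_i-\cdot)\Bigr\|_{L^2}^2 \;\ge\; 0
\]
is the correct completion, and your explicit feature map $\Phi(x)=f_{\text{GH}}(x-\cdot)$ makes the ``valid inner product'' assertion precise rather than declarative. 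Beyond that, you add two things the paper does not attempt: the verification that $f_{\text{GH}}\in L^2$ on the stated parameter range (which is what the lemma's ``appropriately chosen parameters'' must mean, and without which the defining integral need not converge), and the independent Bochner argument via $\widehat{K_{\text{GH}}}=|\widehat{f_{\text{GH}}}|^2\ge 0$, which exploits stationarity and would generalize more easily. The only caution is to keep the care you already flag about the singularity of $K_{\lambda-1/2}$ at the origin when checking square-integrability, and about stating Mercer's theorem on compact subsets of the non-compact domain; with those details written out, your argument is strictly stronger than the one in the paper.
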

\begin{proof}
    To prove the PSD property, we need to show that for any finite set of points $\{x_1,x_2,\dots , x_n\} \in \mathbb{R}^d$, the Gram matrix $[K_{\text{GH}}(x_i,x_j)]$ is positive semi-definite, meaning
    \begin{equation}
        \sum_{i=1}^{n} \sum_{j=1}^{n} c_i c_j K_{\text{GH}} (x_i,x_j) \geq 0 \quad \forall \{c_i\} \in \mathbb{R}^n .
    \end{equation}
    Using the integral representation of the GH kernel and its convolution form, we express the kernel as
    \begin{equation}
        K_{\text{GH}} (x_i,x_j) = \int_{\mathbb{R}^d} f_{\text{GH}} (x_i - u) f_{\text{GH}} (x_j - u) \, du.
    \end{equation}
    For a set of coefficients $\{c_i\} \in \mathbb{R}^n$, the quadratic form now becomes
    \begin{gather}
    \begin{aligned}
        \sum_{i=1}^{n} \sum_{j=1}^{n} c_i & c_j K_{\text{GH}} (x_i,x_j) =  \\ & \sum_{i=1}^{n} \sum_{j=1}^{n} c_i c_j \int_{\mathbb{R}^d} f_{\text{GH}} (x_i - u) f_{\text{GH}} (x_j - u) \, du.
    \end{aligned}
    \end{gather}
    Since $f_{\text{GH}}(z)$ is a PDF and thus non-negative, the kernel $K_{\text{GH}}(x_i,x_j)$ forms a valid inner product in the function space induced by the GH process. Therefore, the Gram matrix is positive semi-definite, and $K_{\text{GH}}(x,y)$ is a valid kernel.
\end{proof}
\begin{theorem}
    For large values of $| x - y |$, the GH kernel $K_{\text{GH}}(x,y)$ asymptotically decays as an exponential function of the distance $| x - y |$, controlled by the parameters $\alpha$ and $\beta$.
\end{theorem}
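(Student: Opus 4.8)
The plan is to reduce the statement to the tail behaviour of a one-dimensional autocorrelation integral, feed in the large-argument asymptotics of the modified Bessel function $K_\nu$, and finish with a Laplace/Watson-type estimate of the resulting convolution. First I would use the translation invariance hidden in \eqref{eq:GH_kernel_expression}: substituting $v = y-u$ shows that $K_{\text{GH}}(x,y)$ depends on $(x,y)$ only through $z:=x-y$, with
\begin{equation}
    K_{\text{GH}}(x,y) = g(z), \qquad g(z) = \int_{\mathbb{R}} f_{\text{GH}}(z+v)\, f_{\text{GH}}(v)\, dv ,
\end{equation}
and a further change of variables gives $g(-z)=g(z)$, so it suffices to analyse $g(z)$ as $z\to+\infty$.

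Second, I would record the tails of $f_{\text{GH}}$ itself. Inserting $K_\nu(t)\sim\sqrt{\pi/(2t)}\,e^{-t}$ as $t\to\infty$ and $\sqrt{\delta^2+z^2}=|z|+O(|z|^{-1})$ into the PDF, the Bessel factor supplies $e^{-\alpha|z|}$ up to an algebraic correction, while the explicit factor $e^{\beta(z-\mu)}$ supplies the skewness-dependent term, so that
\begin{equation}
    f_{\text{GH}}(z)\sim C_{\pm}\,|z|^{\lambda-1}\,e^{(\beta\mp\alpha)z}\qquad\text{as }z\to\pm\infty .
\end{equation}
Thus the right tail decays at rate $\alpha-\beta$ and the left tail at rate $\alpha+\beta$, both strictly positive under the GH admissibility condition $\alpha>|\beta|$ (which also makes $\gamma$ real).

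Third comes the main step: estimating $g(z)$ as $z\to+\infty$. Writing $v=-tz$, the exponential part of the integrand scales like $\exp\!\big(-z[(\alpha-\beta)(1-t)+(\alpha+\beta)t]\big)=\exp\!\big(-z[(\alpha-\beta)+2\beta t]\big)$, which is linear in $t\in[0,1]$ and hence is extremised at an endpoint: at $t=0$ when $\beta\ge0$, at $t=1$ when $\beta\le0$. Accordingly I would split $\mathbb{R}$ into a boundary layer $\{|v|\le R\}$, its mirror $\{|z+v|\le R\}$, and the complementary middle region. On $\{|v|\le R\}$ I replace $f_{\text{GH}}(z+v)$ by its right-tail form, pull out $z^{\lambda-1}e^{-(\alpha-\beta)z}$, and use that $\int f_{\text{GH}}(v)e^{-(\alpha-\beta)v}\,dv<\infty$ — which holds precisely because the left-tail rate $\alpha+\beta$ strictly exceeds $\alpha-\beta$ when $\beta>0$; the region $\{|z+v|\le R\}$ is handled symmetrically (it dominates when $\beta<0$); and the middle region is bounded above by the product of the two tail asymptotics, which is $O(z^{\lambda-1}e^{-(\alpha-|\beta|)z})$ against a remainder that vanishes as $R\to\infty$. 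Dominated convergence then yields $g(z)\sim C\,z^{\lambda-1}e^{-(\alpha-|\beta|)z}$, hence, using evenness,
\begin{equation}
    K_{\text{GH}}(x,y)\;\asymp\;|x-y|^{\lambda-1}\,e^{-(\alpha-|\beta|)\,|x-y|}\qquad\text{as }|x-y|\to\infty ,
\end{equation}
an exponential decay whose rate $\alpha-|\beta|$ is controlled by the scale and skewness parameters, as claimed.

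The hard part will be this last estimate: one must control the competition between the two exponential tail rates and their algebraic prefactors and justify rigorously — via the domain splitting above and dominated convergence — that an $O(1)$-width boundary layer carries the leading term, since the naive ``freeze $f_{\text{GH}}(v)$ at its bulk'' argument is legitimate only when the opposite tail is strictly lighter, i.e. $\beta\neq0$; the degenerate case $\beta=0$ needs a separate look (the rate stays $\alpha$ but the prefactor gains a power of $z$). Minor technical points are the uniform validity of the Bessel asymptotics, which licenses the interchange of limit and integral, and the dimension: since \eqref{eq:GH_kernel_expression} is written over $\mathbb{R}$ I would carry out the argument there and then extend to $x,y\in\mathbb{R}^d$ either coordinatewise for a product GH density or along the segment joining $x$ and $y$ for a radial one.
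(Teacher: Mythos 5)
Your proposal is correct in outline but takes a genuinely different --- and substantially more informative --- route than the paper. The paper's proof consists of two assertions: that the GH tail decays like $e^{-\alpha|x-\mu|}$, and that the kernel therefore behaves like $e^{-\frac{1}{2}(x-y)^{T}\Sigma_{\mathrm{GH}}^{-1}(x-y)}$ for large $|x-y|$; it never analyzes the convolution integral, and its Gaussian (quadratic-in-distance) conclusion is actually in tension with the theorem's own claim of exponential decay in $|x-y|$. You instead reduce to the autocorrelation $g(z)$, extract the two one-sided tail rates $\alpha\mp\beta$ from the large-argument Bessel asymptotics, and run an endpoint-dominated Laplace estimate on the convolution, arriving at the concrete rate $\alpha-|\beta|$ with an algebraic prefactor. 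This substantiates the theorem statement (decay genuinely controlled by $\alpha$ and $\beta$) in a way the paper's argument does not: it identifies that the lighter of the two competing tails survives the convolution, makes the admissibility condition $\alpha>|\beta|$ do visible work, and isolates the degenerate case $\beta=0$ where the prefactor changes. What remains to make it airtight is exactly what you flag --- uniform validity of the Bessel asymptotics and the dominated-convergence justification of the boundary-layer splitting --- plus one bookkeeping point: the paper's PDF as printed places the exponent $\lambda-1/2$ in the denominator (rather than the standard $1/2-\lambda$), which would change your algebraic prefactor from $|z|^{\lambda-1}$ to $|z|^{-\lambda}$ but leaves the exponential rate, and hence the theorem, unaffected.
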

\begin{proof}
    For large $| x - y |$, the GH PDF exhibits tail behavior similar to that of a heavy-tailed distribution. Specifically, the tail of the GH distribution decays exponentially as $e^{- \alpha | x - \mu | }$. Therefore, the kernel function behaves as
\begin{equation}
    K_{\text{GH}}(x,y) \sim e^{-\frac{1}{2} (x- y)^{T} \sum_{\text{GH}}^{-1}(x-y)} \; \text{for}\; | x - y | \rightarrow \infty,
\end{equation}
where $\sum_{\text{GH}}$ is a covariance matrix derived from the second-order moments of the GH process. Thus, the GH kernel asymptotically behaves like a Gaussian kernel for large distances, but with parameters influenced by the tail behavior of the GH distribution.
\end{proof}

\subsection{Integration of GH Kernel with Support Vector Machines}\label{sec:gh-svm}
\textbf{GH-OCSVM.} In the context of anomaly detection, the OCSVM aims to find a hyperplane in the kernel-induced feature space that separates the normal data from the origin, with the anomaly score $S(x)$ being the distance from the hyperplane. The objective function for the OCSVM is
\begin{equation}
    \min_{\pmb{w}, \xi, \rho} \frac{1}{2} \| \pmb{w} \|^2 + \frac{1}{\nu n} \sum_{i=1}^{n} \xi_i - \rho ,
\end{equation}
subject to $\quad \pmb{w} \,\phi(x_i) \geq \rho - \xi_i, \quad \xi_i \geq 0, \quad i=1, \dots, n,$
\vspace{0.2cm}
\\
where $\pmb{w}$ is the normal vector in the high-dimensional feature space, $\phi(x_i)$ is the feature mapping induced by the kernel function $K(x_i,x_j)$, $\xi_i$ are slack variables, $\rho$ is the offset, and $\nu$ controls the number of support vectors.

\begin{theorem}
    Given the GH kernel $K_{\text{GH}}(x_i,x_j)$, the dual form of the OCSVM optimization problem is
    \begin{equation}
        \max_{\alpha_i} \sum_{i=1}^n \alpha_i K_{\text{GH}}(x_i,x_i) - \frac{1}{2} \sum_{i=1}^n \sum_{j=1}^n \alpha_i \alpha_j K_{\text{GH}}(x_i,x_j), 
    \end{equation}
    subject to
    $ \quad 0 \leq \alpha_i \leq \frac{1}{\nu n}, \quad \sum_{i=1}^n \alpha_i =1$.
\end{theorem}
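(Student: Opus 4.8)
The plan is to obtain the dual by the textbook Lagrangian route for the one-class SVM, invoking Lemma~1 at the end to certify that the resulting program is a genuine concave quadratic program. First I would attach a multiplier $\alpha_i \ge 0$ to each margin constraint $\pmb{w}^\top\phi(x_i) \ge \rho - \xi_i$ and a multiplier $\beta_i \ge 0$ to each sign constraint $\xi_i \ge 0$, forming the Lagrangian
\begin{equation}
L(\pmb{w},\xi,\rho,\alpha,\beta) = \tfrac12\|\pmb{w}\|^2 + \tfrac{1}{\nu n}\sum_{i=1}^n \xi_i - \rho - \sum_{i=1}^n \alpha_i\bigl(\pmb{w}^\top\phi(x_i) - \rho + \xi_i\bigr) - \sum_{i=1}^n \beta_i\xi_i .
\end{equation}

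Next I would impose stationarity of $L$ in the primal variables. Differentiating in $\pmb{w}$ gives the representer identity $\pmb{w} = \sum_i \alpha_i\phi(x_i)$; differentiating in $\rho$ gives $\sum_i \alpha_i = 1$; and differentiating in each $\xi_i$ gives $\tfrac{1}{\nu n} - \alpha_i - \beta_i = 0$, which, combined with $\alpha_i\ge0$ and $\beta_i\ge0$, pins $\alpha_i$ to the box $0 \le \alpha_i \le \tfrac{1}{\nu n}$. These are precisely the feasibility constraints stated in the theorem.

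Then I would substitute $\pmb{w} = \sum_i\alpha_i\phi(x_i)$ back into $L$, replace every inner product by $\langle\phi(x_i),\phi(x_j)\rangle = K_{\text{GH}}(x_i,x_j)$, and use the two identities just derived to annihilate the $\rho$- and $\xi$-dependent terms. What survives is a quadratic functional of $\alpha$ alone; isolating the diagonal contributions $K_{\text{GH}}(x_i,x_i)$ yields the objective $\sum_i \alpha_i K_{\text{GH}}(x_i,x_i) - \tfrac12\sum_{i,j}\alpha_i\alpha_j K_{\text{GH}}(x_i,x_j)$. Since the dual maximizes the infimum of $L$ over the primal variables, the problem becomes the stated maximization over $\{0\le\alpha_i\le\tfrac1{\nu n},\ \sum_i\alpha_i=1\}$.

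Finally I would check that the dual is legitimate. Strong duality holds because the primal objective is convex and its constraints affine in the feature space, while Slater's condition is trivially satisfied (pick any $\rho$ and take the $\xi_i$ large), so the duality gap vanishes and the dual optimum coincides with the primal optimum. Lemma~1 then supplies positive semi-definiteness of the Gram matrix $[K_{\text{GH}}(x_i,x_j)]$, so $-\tfrac12\alpha^\top[K_{\text{GH}}]\alpha$ is concave and the maximization is over a compact convex set, hence attained. I expect the main obstacle to be purely bookkeeping: confirming the cancellation of the $\xi_i$ and $\rho$ terms and the exact coefficient on the diagonal term, and stating cleanly which primal convention (separating hyperplane versus minimal enclosing ball, which coincide here because the kernel induces a fixed feature norm) produces that diagonal term -- no deeper analytic difficulty arises, since the substantive fact, positive semi-definiteness of $K_{\text{GH}}$, is already established.
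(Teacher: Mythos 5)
Your proposal is correct and follows the same Lagrangian-duality route as the paper, but is considerably more careful: the paper's own proof writes an incomplete Lagrangian (omitting the $\frac{1}{\nu n}\sum_i \xi_i - \rho$ terms and the multipliers on $\xi_i \ge 0$) and simply asserts the dual, whereas you derive the stationarity conditions and hence the box constraints $0\le\alpha_i\le\frac{1}{\nu n}$ and $\sum_i\alpha_i=1$ explicitly. You also correctly flag the one genuine subtlety that the paper glosses over entirely: the diagonal term $\sum_i \alpha_i K_{\text{GH}}(x_i,x_i)$ does not come out of the hyperplane primal (which yields only $-\frac{1}{2}\alpha^\top K\alpha$), but it is harmless here because the GH kernel of Eq.~(\ref{eq:GH_kernel_expression}) is translation-invariant, so $K_{\text{GH}}(x_i,x_i)=\int f_{\text{GH}}^2(u)\,du$ is the same constant for every $i$ and the constraint $\sum_i\alpha_i=1$ renders that term a fixed offset.
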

\begin{proof}
    The primal optimization problem for OCSVM is
    \begin{equation}
        \min_{\pmb{w},\xi_i,\rho} \frac{1}{2} \| \pmb{w} \|^2 + \frac{1}{\nu n} \sum_{i=1}^n \xi_i - \rho .
    \end{equation}
    Using Lagrange multipliers $\alpha_i \geq 0$ for each constraint $\pmb{w}\phi(x_i) \geq \rho - \xi_i$, the Lagrangian is
    \begin{equation}
        \mathcal{L}(\pmb{w},\rho,\alpha_i) = \frac{1}{2} \| \pmb{w} \|^2 - \sum_{i=1}^n \alpha_i \Bigl(\pmb{w}\, \phi(x_i) - \rho + \xi_i \Bigr). 
    \end{equation}
    Taking the dual form of the problem and substituting the GH kernel $K_{\text{GH}}(x_i,x_j) = \phi (x_i) \phi (x_j)$, we obtain the dual problem
    \begin{equation}
        \max_{\alpha_i} \sum_{i=1}^n \alpha_i K_{\text{GH}}(x_i,x_i) - \frac{1}{2} \sum_{i=1}^n \sum_{j=1}^n \alpha_i \alpha_j  K_{\text{GH}}(x_i,x_j),
    \end{equation}
    with the constraints $0 \leq \alpha_i \leq \frac{1}{\nu n}$ and $\sum_{i=1}^n \alpha_i =1$. This establishes the optimality condition for the dual formulation of OCSVM with GH kernel. 
\end{proof}

\subsection{Kernel-Based Density Estimation using GH Processes}\label{sec:gh-kde}
We next develop a kernel-based anomaly detection method that utilizes GH processes and is distinct from the standard SVM method. Instead, it leverages concepts from kernel density estimation and spectral methods.

\textbf{Kernel Density Estimation (KDE).}
KDE is a non-parametric method to estimate the PDF of a random variable. It evaluates the underlying density of the data at each point by averaging the contributions of all points using a kernel function. Low-density regions are typically flagged as anomalous. Given a set of observations $\{x_1,x_2,\dots,x_n\}$, KDE defines the estimated density $\hat{f}(x)$ at any point $x \in \mathbb{R}^d$ as
\begin{equation}
    \hat{f}(x) = \frac{1}{nh^d} \sum_{i=1}^n K \biggl(\frac{x-x_i}{h} \biggr),
\end{equation}
where $K(\cdot)$ is a kernel function,
$h>0$ is a smoothing parameter (or bandwidth), and $d$ is the dimension of the data. In our case, the kernel function $K(\cdot)$ will be derived from the GH process.

\textbf{GH-KDE.}
The GH-KDE is a kernel density estimator where the kernel function is based on the GH process. The GH-KDE estimate $\hat{f}_{GH}(x)$ of the density at any point $x$ is given by
\begin{equation}
\hat{f}_{GH}(x) = \frac{1}{n h^d} \sum_{i=1}^{n} K_{GH} \left( \frac{x - x_i}{h} \right),
\end{equation}
where $K_{GH}(\cdot)$ is the positive semi-definite GH kernel function defined in Eq. (\ref{eq:GH_kernel_expression}), and $h$ is the bandwidth. The bandwidth $h$ controls the level of smoothing, with smaller $h$ leading to sharper peaks in the estimated density.

\begin{theorem}\label{thm:consistencyGH-KDE}
    The GH-KDE estimator $\hat{f}_{GH}(x)$ is a consistent estimator of the true density $f(x)$ as the number of data points $n \to \infty$ and the bandwidth $h \to 0$ at an appropriate rate.
\end{theorem}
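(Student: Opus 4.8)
The plan is to follow the classical Parzen--Rosenblatt route: establish pointwise consistency by showing that the mean squared error $\mathbb{E}\big[(\hat f_{GH}(x) - f(x))^2\big] = \mathrm{Bias}^2 + \mathrm{Var}$ vanishes under the stated rate conditions, and then invoke Chebyshev's inequality to pass to convergence in probability. The argument splits into three steps: (a) a preliminary verification that the (single--argument) GH kernel $K_{GH}$ satisfies the regularity conditions required of an admissible KDE kernel; (b) a bias estimate; and (c) a variance estimate.

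For step (a) I would exploit the convolution structure of Eq.~(\ref{eq:GH_kernel_expression}): writing $\tilde f_{GH}(u) = f_{GH}(-u)$, the GH kernel is $K_{GH} = f_{GH} \ast \tilde f_{GH}$, i.e.\ the density of $X - X'$ for independent $X, X' \sim \mathrm{GH}(\lambda,\alpha,\beta,\delta,\mu)$. From this: (i) $\int K_{GH}(z)\,dz = 1$ and $K_{GH} \ge 0$, since $f_{GH}$ is a density; (ii) $K_{GH}$ is symmetric with vanishing first moment, because $X - X' \stackrel{d}{=} -(X-X')$; (iii) under the admissibility assumption $\alpha > |\beta|$ (so $\gamma = \sqrt{\alpha^2-\beta^2} > 0$) and $\delta > 0$, the GH density has tails of order $|z|^{\lambda-1} e^{-(\alpha-|\beta|)|z|}$ — the same exponential decay used in the preceding tail theorem — hence $X$ has finite moments of all orders and in particular $\mu_2(K_{GH}) := \int |z|^2 K_{GH}(z)\,dz < \infty$; (iv) $f_{GH} \in L^1 \cap L^\infty$, so $K_{GH}$ is bounded and continuous, whence $R(K_{GH}) := \int K_{GH}(z)^2\,dz < \infty$. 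These are exactly the hypotheses needed below.

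For step (b), change variables $z = (x-u)/h$ to obtain $\mathbb{E}[\hat f_{GH}(x)] = \int K_{GH}(z)\, f(x - hz)\,dz$; assuming $f \in C^2$ with bounded second derivatives near $x$, a second--order Taylor expansion together with (a)(i)--(ii) gives $\mathbb{E}[\hat f_{GH}(x)] - f(x) = \tfrac{h^2}{2}\mu_2(K_{GH})\,\Delta f(x) + o(h^2) \to 0$ as $h \to 0$. For step (c), independence of the samples yields $\mathrm{Var}[\hat f_{GH}(x)] = \tfrac{1}{n}\mathrm{Var}\!\big[\tfrac{1}{h^d}K_{GH}(\tfrac{x-x_1}{h})\big] \le \tfrac{1}{nh^d}\int K_{GH}(z)^2 f(x - hz)\,dz \le \tfrac{\|f\|_\infty R(K_{GH})}{nh^d}$, with leading term $\tfrac{f(x) R(K_{GH})}{nh^d}$, so $\mathrm{Var}[\hat f_{GH}(x)] \to 0$ whenever $nh^d \to \infty$. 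Combining, the MSE is $O(h^4) + O\!\big(1/(nh^d)\big)$, which tends to $0$ under $h \to 0$ and $nh^d \to \infty$ (e.g.\ $h \asymp n^{-1/(d+4)}$); Chebyshev's inequality then yields $\hat f_{GH}(x) \to f(x)$ in probability, i.e.\ weak pointwise consistency. Strong and uniform consistency follow from standard refinements (Borel--Cantelli with $nh^d/\log n \to \infty$, or bracketing/VC arguments) but are not required for the stated claim.

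The main obstacle is step (a): one must be sure the explicit Bessel--function form of $f_{GH}$ really delivers a finite convolution, a finite second moment, and an $L^\infty$ bound, which is precisely the content of the ``appropriately chosen parameters'' clause ($\alpha > |\beta|$, $\delta > 0$). Once the kernel is certified admissible, steps (b)--(c) are the textbook Parzen--Rosenblatt computation and go through essentially verbatim.
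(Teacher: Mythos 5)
Your proof is correct and follows the same Parzen--Rosenblatt bias/variance decomposition as the paper, but it is more careful on the one step where the paper's argument is actually shaky. The paper justifies the $O\bigl(1/(nh^d)\bigr)$ variance bound by asserting that $K_{GH}$ is ``a valid kernel with finite support,'' which is false: the GH kernel, being the convolution $f_{GH}\ast\tilde f_{GH}$, has full support with exponentially decaying tails. Your step (a) repairs this by deriving the properties that are actually needed --- $K_{GH}\ge 0$, $\int K_{GH}=1$, symmetry of the law of $X-X'$ (so the first moment vanishes despite the skewness $\beta\neq 0$ of $f_{GH}$ itself, a point the paper never addresses but which is essential for the $O(h^2)$ bias), finite second moment from the $e^{-(\alpha-|\beta|)|z|}$ tail under $\alpha>|\beta|$, and $R(K_{GH})=\int K_{GH}^2<\infty$ from boundedness of the convolution --- and then bounds the variance by $\|f\|_\infty R(K_{GH})/(nh^d)$ rather than by appeal to compact support. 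The paper also never states the required rate condition $nh^d\to\infty$ explicitly in the theorem (only inside the proof), nor the smoothness hypothesis on $f$; you make both explicit. In short: same skeleton, but your version supplies the kernel-admissibility lemma that the paper's proof implicitly assumes and partially misstates.
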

\begin{proof}
    We must prove that the GH-KDE estimator $\hat{f}_{GH}(x)$ converges in probability to the true density $f(x)$ as $n \to \infty$. The consistency of KDE methods relies on two factors. As $n \to \infty$ and $h \to 0$, the bias of the estimator should vanish. As $n \to \infty$, the variance of the estimator should also decrease.

The bias of the GH-KDE is given by the expected difference between the estimated density and the true density, as
\begin{equation}
\text{Bias}\Bigl(\hat{f}_{GH}(x)\Bigr) = \mathbb{E}\Bigl[\hat{f}_{GH}(x)\Bigr] - f(x).
\end{equation}
Expanding $\hat{f}_{GH}(x)$ in a Taylor series around $x$, we obtain that the bias term is of order $O(h^2)$, assuming the smoothness of the true density $f(x)$, that is
\begin{equation}
\text{Bias}\Bigl(\hat{f}_{GH}(x)\Bigr) \sim O(h^2) \quad \text{as} \quad  h \to 0.
\end{equation}
The variance of the estimator is
\begin{equation}
\text{Var}\Bigl(\hat{f}_{GH}(x)\Bigr) = \frac{1}{n h^d} \int_{\mathbb{R}^d} K_{GH}^2\left( \frac{x - u}{h} \right) f(u) \, du.
\end{equation}
Using the fact that $K_{GH}(x, y)$ is a valid kernel with finite support, the variance term is of order $O\left( \frac{1}{n h^d} \right)$.
Thus, as $n \to \infty$ and $h \to 0$ such that $n h^d \to \infty$, the variance goes to zero. Therefore, combining the bias and variance terms, we conclude that the GH-KDE is a consistent estimator, that is
\begin{equation}
\hat{f}_{GH}(x) \overset{p}{\to} f(x) \quad \text{as} \quad n \to \infty \quad \text{and} \quad h \to 0.
\end{equation}
\end{proof}
\textbf{Anomaly Score Based on GH-KDE.}
For anomaly detection, the estimated density $\hat{f}_{GH}(x)$ can be used to assign an anomaly score to each point. The anomaly score for a point $x$ is defined as
\begin{equation}
S(x) = -\log \Bigl(\hat{f}_{GH}(x)\Bigr),
\end{equation}
where $\hat{f}_{GH}(x)$ is the GH-KDE estimate of the density at $x$.

\begin{theorem}
    As the number of data points $n \to \infty$, the anomaly score $S(x)$ asymptotically converges to the negative log-likelihood of the true density $f(x)$, i.e.,
\begin{equation}
S(x) \to -\log \Bigl(f(x)\Bigr) \quad \text{as} \quad n \to \infty.
\end{equation}
\end{theorem}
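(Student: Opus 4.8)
The plan is to obtain this statement as a direct corollary of Theorem~\ref{thm:consistencyGH-KDE} combined with the continuous mapping theorem. First I would fix a point $x$ lying in the interior of the support of $f$ with $f(x) > 0$; this positivity is the natural regime for the anomaly score (points with $f(x) = 0$ are trivially anomalous and $-\log f(x)$ is $+\infty$ there), and it is precisely what makes the map $g(t) = -\log t$ continuous at $t = f(x)$. Under the bandwidth schedule of Theorem~\ref{thm:consistencyGH-KDE}, namely $h = h_n \to 0$ with $n h_n^d \to \infty$, that theorem already gives $\hat{f}_{GH}(x) \overset{p}{\to} f(x)$, so the remaining work is purely a stability-under-continuous-transformation argument, with $\to$ in the statement read as convergence in probability (the mode established for $\hat f_{GH}$ in the previous theorem).

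Second, I would spell out the continuous mapping step explicitly rather than merely cite it: given $\eta > 0$, choose $\delta > 0$ small enough that $f(x) - \delta > 0$ and $|t - f(x)| < \delta$ implies $|{-\log t} - ({-\log f(x)})| < \eta$ (possible since $-\log$ is uniformly continuous on the compact interval $[f(x) - \delta, f(x) + \delta] \subset (0,\infty)$). Then $\Pr\bigl(|S(x) - ({-\log f(x)})| \ge \eta\bigr) \le \Pr\bigl(|\hat{f}_{GH}(x) - f(x)| \ge \delta\bigr)$, and the right-hand side tends to $0$ by Theorem~\ref{thm:consistencyGH-KDE}. Since $\eta$ was arbitrary, this yields $S(x) \overset{p}{\to} -\log f(x)$, which is the claim.

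The step that genuinely needs care — rather than being waved through — is the behaviour of the logarithm near $0$: $-\log t$ is unbounded as $t \downarrow 0$, so one must confine the argument to a region bounded away from the origin. Two observations handle this. Because $K_{GH}$ is a convolution of strictly positive GH densities it is itself strictly positive, so $\hat{f}_{GH}(x) > 0$ almost surely for every finite $n$ and $S(x)$ is well defined; and the consistency result localizes $\hat{f}_{GH}(x)$ inside $[f(x) - \delta, f(x) + \delta]$ with probability tending to one, which is exactly the compact set on which $-\log$ is uniformly continuous. If one wanted the stronger conclusion of almost-sure convergence, or $L^1$ convergence of $S(x)$, one would additionally need an almost-sure version of Theorem~\ref{thm:consistencyGH-KDE} or a uniform integrability / lower-envelope bound on $\hat{f}_{GH}(x)$; but for the stated convergence as $n \to \infty$ the continuous mapping argument above is sufficient.
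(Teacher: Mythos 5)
Your proposal follows essentially the same route as the paper: both deduce the result from Theorem~\ref{thm:consistencyGH-KDE} by passing the convergence in probability through the logarithm. The paper simply writes ``taking the logarithm of both sides,'' whereas you make the continuous mapping step explicit and correctly flag the two details the paper glosses over --- the need for $f(x) > 0$ so that $-\log$ is continuous at the limit, and the strict positivity of $\hat{f}_{GH}(x)$ so that $S(x)$ is well defined for finite $n$ --- so your version is a more careful rendering of the same argument.
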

\begin{proof}
    Using the consistency of the GH-KDE estimator $\hat{f}_{GH}(x)$ from Theorem \ref{thm:consistencyGH-KDE}, we know that
\begin{equation}
\hat{f}_{GH}(x) \overset{p}{\to} f(x) \quad \text{as} \quad n \to \infty.
\end{equation}
Taking the logarithm of both sides, we have
\begin{equation}
\log \Bigl(\hat{f}_{GH}(x)\Bigr) \to \log\Bigl(f(x)\Bigr) \quad \text{as} \quad n \to \infty.
\end{equation}
Therefore, the anomaly score converges to the negative log-likelihood of the true density, given by
\begin{equation}
S(x) = -\log \Bigl(\hat{f}_{GH}(x)\Bigr) \to -\log \Bigl(f(x) \Bigr) \quad \text{as} \quad n \to \infty.
\end{equation}
This shows that the anomaly score based on GH-KDE provides an asymptotically accurate measure of the rarity of a point $x$ in the dataset.
\end{proof}

\begin{table}[t!]
\caption{\textbf{Anomaly Detection Performance on Synthetic Dataset} \\(mean $\pm$ SD over 10 seeds for methods with stochasticity or random sampling). Gridsearch used for finetuning.}
\vspace{-0.25cm}
\label{tab:anomaly_detection_synthetic}
\renewcommand{\arraystretch}{0.8} % Adjust row spacing
\centering
\resizebox{\columnwidth}{!}{%
\begin{tabular}{@{}lrrr@{}}
\toprule
\textbf{Model} &
  \textbf{\begin{tabular}[c]{@{}l@{}}AUC-ROC ($\uparrow$)\end{tabular}} &
  \textbf{\begin{tabular}[c]{@{}l@{}}Train. Time (s) ($\downarrow$)\end{tabular}} &
  \textbf{\begin{tabular}[c]{@{}l@{}}\#Hyperparams\end{tabular}} \\ \midrule
\multicolumn{4}{l}{\textbf{OCSVM Kernels}} \\
RBF                           & $0.976$  & $0.008$  &  $2$\\
Polynomial                    & $0.870$  & $0.02$  &  $4$\\
Linear                        & $0.547$  & \textcolor{orange}{{$\bm{0.001}$}}  &  $1$\\
Sigmoid                       & $0.437$  & $0.007$  &  $3$\\ 

\rowcolor{lightgray!40}
Full GH Kernel                & \textcolor{orange}{$\bm{0.997}$}  & $1.60$  &  $5$\\
\rowcolor{lightgray!40}
GH Kernel (Gaussian)          & $0.981$  & $1.15$  &  $3$\\
\rowcolor{lightgray!40}
GH Kernel (NIG)               & $0.989$  & $1.18$  &  $4$\\
\rowcolor{lightgray!40}
GH Kernel (Student's t)       & $0.984 $ & $1.67$  & $ 4$\\
\rowcolor{lightgray!40}
GH Kernel (Hyperbolic)        & $0.992$ & $1.62 $ &  $4$\\ \midrule
\multicolumn{4}{l}{\textbf{KDE Kernels}} \\
Gaussian                      & $0.203$  & $0.004$   &  $1$\\
Epanechnikov                  & $0.500$  & $0.004 $  &  $1$\\
Tophat                        & $0.500$  & $0.004$   &  $1$\\
Exponential                   & $0.055 $ & $0.004$  &  $1$\\ 

\rowcolor{lightgray!40}
Full GH Kernel                & $0.956$  & $4.84$  &  $5$\\
\rowcolor{lightgray!40}
GH Kernel (Gaussian)          & $0.966$  & $4.83$  &  $2$\\
\rowcolor{lightgray!40}
GH Kernel (NIG)               & $0.967$  & $4.80$  &  $3$\\
\rowcolor{lightgray!40}
GH Kernel (Student's t)       & $0.968$  & $4.82$  &  $3$\\
\rowcolor{lightgray!40}
GH Kernel (Hyperbolic)        & $0.950$  & $4.87$  &  $3$\\ \midrule
Vanilla Autoencoder           & $0.220 \pm 0.16$  & $1.06$  &  $\sim 2 - 7$\\
Variational Autoencoder       & $0.461 \pm 0.04$  & $4.21$  &  $\sim 3 - 8$\\ 
\begin{tabular}[c]{@{}l@{}}Deep Autoencoding GMM \cite{zong2018deep}\end{tabular}  & $0.972 \pm 0.03$ & $9.73$ & $\sim 11 - 15$
\\ 
\begin{tabular}[c]{@{}l@{}}Deep SVDD \cite{pmlr-v80-ruff18a}\end{tabular}  & $0.984 \pm 0.02$ & $7.68$ & $\sim 10 - 11$
\\ 
\begin{tabular}[c]{@{}l@{}}Isolation Forest \cite{IsolationForest}\end{tabular}  & $0.210 \pm 0.01$ & $9.51$ & $\sim 5 - 6$
\\ 
\begin{tabular}[c]{@{}l@{}}One-Class Neural Network \cite{One-ClassNN}\end{tabular}  & $0.709 \pm 0.02$ & $5.28$ &  $\sim 8 - 13$ \\
\begin{tabular}[c]{@{}l@{}}Memory-augmented Deep AE \cite{gong2019memorizing} 
\end{tabular} & $0.731 \pm 0.05$ & $27.62$ &  $\sim 8 - 14$
\\ \bottomrule
\end{tabular}%
}
\end{table}
\begin{figure}
    \centering
    \includegraphics[width=0.6\columnwidth]{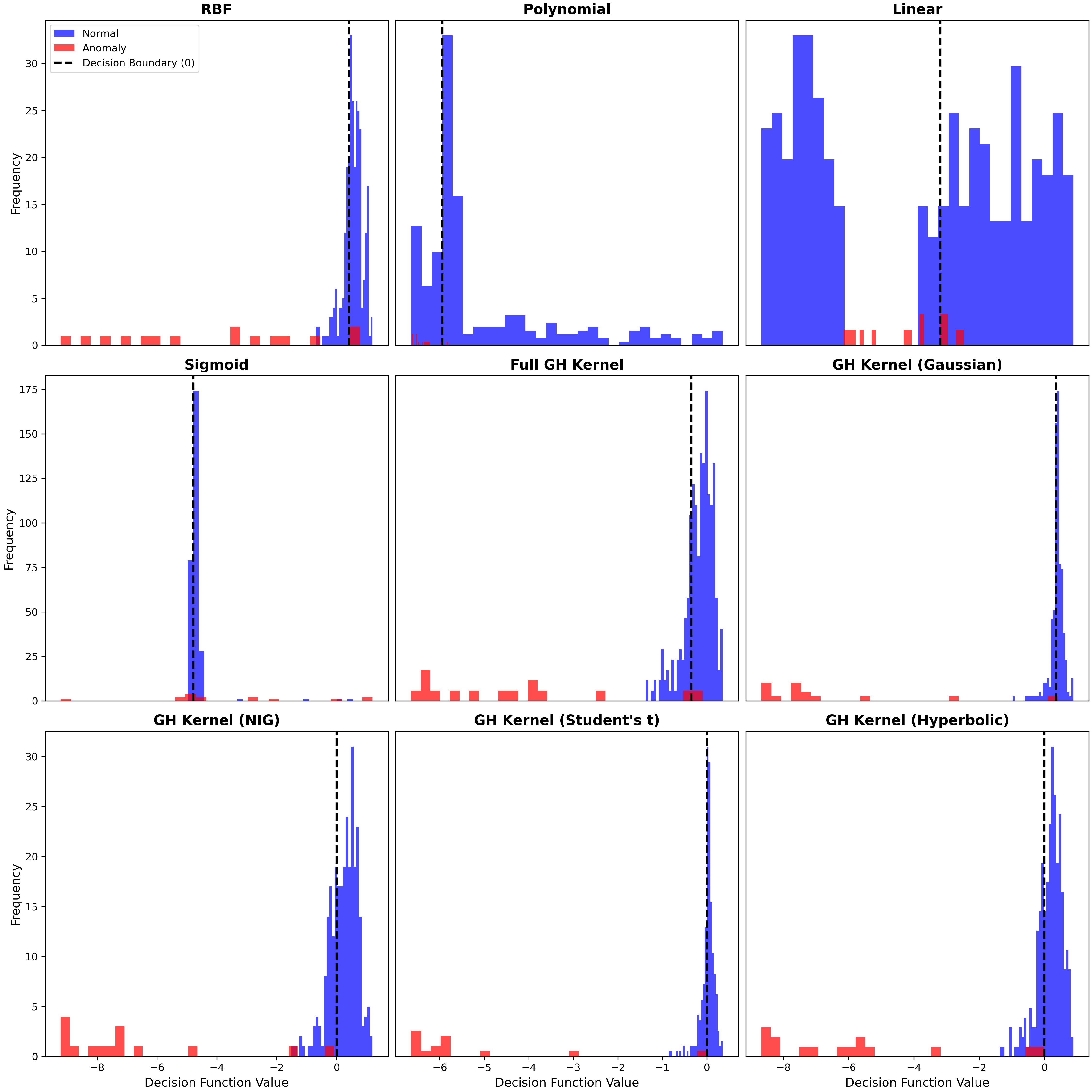}
    \caption{Histograms of decision function values for the OCSVM kernels from Fig. \ref{fig:decbound} and Tab. \ref{tab:anomaly_detection_synthetic}, highlighting the separation between normal data (blue) and anomalies (red). The dashed black line represents the decision boundary at 0.}
    \label{fig:enter-label}
\end{figure}
%%%%%%%%%%%%%%%%%%%%%%%%%%%%%%%%%%%%%%%%%%%%%%%%%%%%%%%%%%%%%%%%%%%%%%%%%%%%%%%%%%%%%%%%%%%%%%%%%%%%%%%%%%%%%%%%%%%%%%%%%%%%%%%%%%%%%%%%%%%%%%%%%%%%%%%%%%%%%%%%%%%%%%%%%%%%%%%%%%%%%%%%%%%%%%%%%%%%%%%%%%%%%%
\section{Empirical Validation}\label{sec:experiments}
For validation, we used a synthetic dataset and two common anomaly detection datasets which are KDDCup99 \cite{KDDCUP99} and \href{https://www.kaggle.com/datasets/uciml/forest-cover-type-dataset}{ForestCover} from the UCI Machine Learning Repository. We compared kernel functions for OCSVM, including the Radial Basis Function (RBF), Polynomial, Linear, and
Sigmoid. These were tested against kernels derived from the GH distributions, that include a general case covering various GH distributions (Full GH), one reduced to the Gaussian distribution, one using the NIG distribution, one that captures heavy-tailed behavior (Student's t), and one hyperbolic capturing different tail behaviors. The inclusion of GH distributions allows us to test different tail behaviors and skewness in the data distributions. For KDE, we tested the standard Gaussian, Epanechnikov, Tophat, and Exponential kernels. We also implemented KDE using the GH kernel
variants to provide a direct comparison
with OCSVM. We included other popular supervised and unsupervised anomaly detection techniques \cite{zong2018deep,pmlr-v80-ruff18a,IsolationForest, One-ClassNN, gong2019memorizing}. Among the metrics reported, precision and recall are particularly important for imbalanced datasets, where the anomalous class is underrepresented. We also report CPU training time and the number of hyperparameters per method.

In Table \ref{tab:anomaly_detection_synthetic} and Figures \ref{fig:decbound} and \ref{fig:enter-label}, we aim to generate data points on a set of manifolds for the normal class, representing structured patterns, and to scatter anomalies as sparse, unstructured points. This mimics real-world settings where normal data follows structured distributions (e.g., sensor readings, time-series, etc.) while anomalies break that structure. Kernels like Full GH and GH variants show clear separation near the decision boundary, indicating strong anomaly detection performance. In contrast, the Linear and Sigmoid kernels displayed significant overlap, reflecting poorer separation and lower detection capability (Fig.\ref{fig:enter-label}).
In Table \ref{tab:anomaly_detection}, the KDDCup99 dataset originates from network intrusion detection, with multiple classes of normal and anomalous behavior. The ForestCover dataset contains features describing forest cover types, with one class treated as normal and another as anomalous.
%%%%%%%%%%%%%%%%%%%%%%%%%%%%%%%%%%%%%%%%%%%%%%%%%%%%%%%%%%%%%%%%%%%%%%%%%%%%%%%%%%%%%%%%%%%%%%%%%%%%%%%%%%%%%%%%%%%%%%%%%%%%%%%%%%%%%%%%%%%%%%%%
%%%%%%%%%%%%%%%%%%%%%%%%%%%%%%%%%%%%%%%%%%%%%%%%%%%%%%%%%%%%%%%%%%%%%%%%%%%%%%%%%%%%%%%%%%%%%%%%%%%%%%%%%%%%%%%%%%%%%%%%%%%%%
\begin{table}[t!]
\caption{\textbf{Anomaly Detection Performance on KDDCup99 \cite{KDDCUP99} and \href{https://www.kaggle.com/datasets/uciml/forest-cover-type-dataset}{ForestCover}}.}
\vspace{-0.25cm}
\label{tab:anomaly_detection}
\renewcommand{\arraystretch}{0.8} % Adjust row spacing
\centering
\resizebox{\columnwidth}{!}{%
\begin{tabular}{@{}lcccccc@{}}
\toprule
\textbf{Model} 
  & \multicolumn{3}{c}{\textbf{\uline{KDDCup99}}} 
  & \multicolumn{3}{c}{\uline{\textbf{ForestCover}}} \cr 
   & \textbf{Acc.($\%$)} 
   & \textbf{Rec.($\%$)} 
   & \textbf{Prec.($\%$)} 
   & \textbf{Acc.($\%$)} 
   & \textbf{Rec.($\%$)} 
   & \textbf{Prec.($\%$)} \cr 
\midrule
\multicolumn{7}{l}{\textbf{OCSVM Kernels}} \cr
RBF                          & $96.5$  & $92.8$  & $93.2$  & $96.8$  & $91.5$  & $88.9$ \cr
Polynomial                   & $90.3$  & $88.1$  & $85.4$  & $94.2$  & $87.1$  & $84.7$ \cr
Linear                       & $88.7$  & $80.5$  & $76.2$  & $92.6$  & $81.8$  & $75.3$ \cr
Sigmoid                      & $65.2$  & $50.3$  & $45.8$  & $64.1$  & $51.4$  & $48.2$ \cr
\rowcolor{lightgray!40}
Full GH Kernel               & $98.7$  & \textcolor{orange}{$\bm{95.2}$}  & \textcolor{orange}{$\bm{96.7}$}  & $97.1$  & $94.6$  & $93.3$ \cr
\rowcolor{lightgray!40}
GH Kernel (Gaussian)         & $97.3$  & $93.4$  & $94.8$  & $97.6$  & $93.8$  & $92.3$ \cr
\rowcolor{lightgray!40}
GH Kernel (NIG)              & $97.6$  & $94.1$  & $95.5$  & $97.8$  & $94.5$  & $93.1$ \cr
\rowcolor{lightgray!40}
GH Kernel (Student's t)      & $97.9$  & $94.7$  & $96.1$  & $97.7$  & $94.3$  & $92.8$ \cr
\rowcolor{lightgray!40}
GH Kernel (Hyperbolic)       & $97.8$  & $94.6$  & $96.0$  & $97.9$  & \textcolor{orange}{$\bm{94.9}$}  & $93.5$ \cr 
\midrule
\multicolumn{7}{l}{\textbf{KDE Kernels}} \cr
Gaussian                     & $52.3$  & $43.2$  & $38.7$  & $52.7$  & $39.6$  & $35.2$ \cr
Epanechnikov                 & $68.9$  & $55.4$  & $50.6$  & $70.3$  & $54.8$  & $50.1$ \cr
Tophat                       & $67.8$  & $54.9$  & $49.2$  & $69.2$  & $53.9$  & $48.7$ \cr
Exponential                  & $35.4$  & $30.1$  & $25.8$  & $42.3$  & $30.8$  & $26.4$ \cr
\rowcolor{lightgray!40}
Full GH Kernel               & $92.1$  & $86.4$  & $85.0$  & $94.2$  & $88.3$  & $87.5$ \cr
\rowcolor{lightgray!40}
GH Kernel (Gaussian)         & $91.5$  & $85.3$  & $84.1$  & $92.8$  & $86.1$  & $85.0$ \cr
\rowcolor{lightgray!40}
GH Kernel (NIG)              & $92.7$  & $86.8$  & $85.9$  & $93.5$  & $87.4$  & $86.3$ \cr
\rowcolor{lightgray!40}
GH Kernel (Student's t)      & $93.1$  & $87.2$  & $86.5$  & $94.1$  & $88.0$  & $87.1$ \cr
\rowcolor{lightgray!40}
GH Kernel (Hyperbolic)       & $92.9$  & $87.0$  & $86.3$  & $93.8$  & $87.7$  & $86.8$ \cr
\midrule
Vanilla Autoencoder          & $57.3$  & $40.2$  & $45.1$  & $59.4$  & $40.7$  & $46.5$ \cr
Variational Autoencoder      & $62.5$  & $50.7$  & $53.8$  & $68.2$  & $54.3$  & $56.1$ \cr
Deep Autoencoding GMM \cite{zong2018deep} & $97.1$  & $92.3$  & $93.9$  & $97.5$  & $93.5$  & $91.2$ \cr
Deep SVDD \cite{pmlr-v80-ruff18a} & \textcolor{orange}{$\bm{98.2}$} & $94.9$ & $95.8$ & \textcolor{orange}{$\bm{98.0}$} & $94.8$ & \textcolor{orange}{$\bm{93.6}$} \cr
Isolation Forest \cite{IsolationForest} & $55.6$  & $38.6$  & $41.2$  & $58.7$  & $39.3$  & $43.8$ \cr
One-Class Neural Network \cite{One-ClassNN} & $85.7$  & $78.2$  & $80.9$  & $85.1$  & $79.4$  & $77.6$ \cr
Memory-augmented Deep AE \cite{gong2019memorizing} & $89.2$  & $82.3$  & $83.7$  & $88.3$  & $82.7$  & $81.4$ \cr
\bottomrule
\end{tabular}%
}
\end{table}
Here, the results further strengthen our proposed framework in terms of performance and interpretability. 

Overall, incorporating GH-based kernels into both OCSVM and KDE provides flexibility to capture complex data distributions, particularly for imbalanced anomaly detection tasks. Yet, these methods require more careful tuning and training time than standard kernels.

\bibliographystyle{IEEEtran}
\bibliography{IEEEabrv,paper}

\end{document}